\def\comments{1}
\def\eps{\ve}
\renewcommand{\epsilon}{\ve}
\def\ve{\varepsilon}
\newcommand{\id}{\mathbb{I}}
\newcommand{\ex}[2]{{\ifx&#1& \mathbb{E} \else \underset{#1}{\mathbb{E}} \fi \left(#2\right)}}
\newcommand{\pr}[2]{{\ifx&#1& \mathbb{P} \else \underset{#1}{\mathbb{P}} \fi \left(#2\right)}}
\newcommand{\var}[2]{{\ifx&#1& \mathrm{Var} \else \underset{#1}{\mathrm{Var}} \fi \left(#2\right)}}
\newcommand{\N}{\mathbb{N}}
\newcommand{\R}{\mathbb{R}}
\newcommand{\cM}{\mathcal{M}}
\newcommand{\cN}{\mathcal{N}}
\newcommand{\cP}{\mathcal{P}}
\newcommand{\cR}{\mathcal{R}}
\newcommand{\cX}{\mathcal{X}}
\newcommand{\pmo}{\{\pm 1\}}
\newtheorem{theorem}{Theorem}
\newtheorem{remark}[theorem]{Remark}
\newtheorem{lemma}[theorem]{Lemma}
\newtheorem{claim}[theorem]{Claim}
\newtheorem{definition}[theorem]{Definition}
\newcommand{\ignore}[1]{}
\numberwithin{theorem}{section} 
\numberwithin{nontheorem}{section} 
\numberwithin{proposition}{section} 
\numberwithin{observation}{section} 
\numberwithin{remark}{section} 
\numberwithin{fact}{section} 
\numberwithin{lemma}{section} 
\numberwithin{claim}{section} 
\numberwithin{corollary}{section} 
\numberwithin{case}{section} 
\numberwithin{dfn}{section} 
\numberwithin{definition}{section} 
\numberwithin{question}{section} 
\numberwithin{openquestion}{section} 
\numberwithin{res}{section}
\newcommand{\mynote}[2]{{\marginpar{\color{#1} \tiny \sf #2}}}
\newcommand{\mynote}[2]{}
\newcommand{\jnote}[1]{\mynote{blue}{J: {#1}}}
\newcommand{\gnote}[1]{\mynote{red}{G: {#1}}}
\let\originalleft\left
\let\originalright\right
\renewcommand{\left}{\mathopen{}\mathclose\bgroup\originalleft}
\renewcommand{\right}{\aftergroup\egroup\originalright}
\newcommand{\from}{\colon}
\title{A Primer on Private Statistics}
\author {
Gautam Kamath\thanks{Cheriton School of Computer Science, University of Waterloo. {\tt g@csail.mit.edu}. Supported by a University of Waterloo startup grant.}
\and
Jonathan Ullman\thanks{Khoury College of Computer Sciences, Northeastern University. {\tt jullman@ccs.neu.edu}.  Supported by NSF grants CCF-1750640, CNS-1816028, and CNS-1916020.}
}
\begin{document}
\maketitle

\begin{abstract}
  Differentially private statistical estimation has seen a flurry of developments over the last several years. 
  Study has been divided into two schools of thought, focusing on empirical statistics versus population statistics. 
  We suggest that these two lines of work are more similar than different by giving examples of methods that were initially framed for empirical statistics, but can be applied just as well to population statistics.
  We also provide a thorough coverage of recent work in this area.
\end{abstract}

\section{Introduction}
Statistics and machine learning are now ubiquitous in data analysis. Given a dataset, one immediately wonders what it allows us to infer about the underlying population. However, modern datasets don't exist in a vacuum: they often contain sensitive information about the individuals they represent. Without proper care, statistical procedures will result in gross violations of privacy. Motivated by the shortcomings of ad hoc methods for data anonymization, Dwork, McSherry, Nissim, and Smith introduced the celebrated notion of differential privacy~\cite{DworkMNS06}.

From its inception, some of the driving motivations for differential privacy were applications in statistics and the social sciences, notably disclosure limitation for the US Census.  And yet, the lion's share of differential privacy research has taken place within the computer science community.  As a result, the specific applications being studied are often not formulated using statistical terminology, or even as statistical problems.  Perhaps most significantly, much of the early work in computer science (though definitely not all) focus on estimating some property \emph{of a dataset} rather than estimating some property \emph{of an underlying population}.

Although the earliest works exploring the interaction between differential privacy and classical statistics go back to at least 2009~\cite{VuS09,FienbergRY10}, the emphasis on differentially private statistical inference in the computer science literature is somewhat more recent.  However, while earlier results on differential privacy did not always formulate problems in a statistical language, statistical inference was a key motivation for most of this work.  As a result many of the techniques that were developed have direct applications in statistics, for example establishing minimax rates for estimation problems.  

The purpose of this series of blog posts is to highlight some of those results in the computer science literature, and present them in a more statistical language.  Specifically, we will discuss:
\begin{itemize}
	\item Tight minimax lower bounds for privately estimating the mean of a multivariate distribution over $\R^d$, using the technique of \emph{tracing attacks} developed in~\cite{BunUV14, SteinkeU15j, DworkSSUV15, BunSU17, SteinkeU17, KamathLSU19}.

	\item Upper bounds for estimating a distribution in Kolmogorov distance, using the ubiquitous \emph{binary-tree mechanism} introduced in~\cite{DworkNPR10,ChanSS11}.
\end{itemize}

In particular, we hope to encourage computer scientists working on differential privacy to pay more attention to the applications of their methods in statistics, and share with statisticians many of the powerful techniques that have been developed in the computer science literature.

\subsection{Formulating Private Statistical Inference} 

Essentially every differentially private statistical estimation task can be phrased using the following setup.
We are given a dataset $X = (X_1, \dots, X_n)$ of size $n$, and we wish to design an algorithm $M \in \cM$ where $\cM$ is the class of mechanisms that are both:
\begin{enumerate}
  \item differentially private, and 
  \item accurate, either in expectation or with high probability, according to some task-specific measure.
\end{enumerate}
A few comments about this framework are in order.  First, although the accuracy requirement is stochastic in nature (i.e., an algorithm might not be accurate depending on the randomness of the algorithm and the data generation process), the privacy requirement is worst-case in nature.  That is, the algorithm must protect privacy for every dataset $X$, even those we believe are very unlikely.  

Second, the accuracy requirement is stated rather vaguely.  This is because the notion of accuracy of an algorithm is slightly more nuanced, depending on whether we are concerned with \emph{empirical} or \emph{population} statistics.  A particular emphasis of these blog posts is to explore the difference (or, as we will see, the lack of a difference) between these two notions of accuracy.
The former estimates a quantity of the observed dataset, while the latter estimates a quantity of an unobserved distribution which is assumed to have generated the dataset. 

More precisely, the former can be phrased in terms of empirical loss, of the form:
\[
\min_{M \in \cM}~\max_{X \in \cX}~\ex{M}{\ell(M(X), f(X))},
\]
where $\cM$ is some class of \emph{randomized estimators} (e.g.,~differentially private estimators), $\cX$ is some class of \emph{datasets}, $f$ is some quantity of interest, and $\ell$ is some \emph{loss function}.  That is, we're looking to find an estimator that has small expected loss on \emph{any dataset} in some class.

In contrast, statistical minimax theory looks at statements about population loss, of the form:
\[
\min_{M \in \cM}~\max_{P \in \cP}~\ex{X \sim P, M}{\ell(M(X),f(P))},
\]
where $\cP$ is some family of \emph{distributions} over datasets (typically consisting of i.i.d.\ samples).  That is, we're looking to find an estimator that has small expected loss on random data from \emph{any distribution} in some class.  In particular, note that the randomness in this objective additionally includes the data generating procedure $X \sim P$.

These two formulations are formally very different in several ways.  First, the empirical formulation requires an estimator to have small loss on \emph{worst-case} datasets, whereas the statistical formulation only requires the estimator to have small loss \emph{on average} over datasets drawn from certain distributions.  Second, the statistical formulation requires that we estimate the unknown quantity $f(P)$, and thus necessitates a solution to the non-private estimation problem.  On the other hand, the empirical formulation only asks us to estimate the known quantity $f(X)$, and thus if there were no privacy constraint it would always be possible to compute $f(X)$ exactly.  Third, typically in the statistical formulation, we require that the dataset is drawn i.i.d., which means that we are more constrained when proving lower bounds for estimation than we are in the empirical problem.

However, in practice,\footnote{More precisely, in the practice of doing theoretical research.} these two formulations are more alike than they are different, and results about one formulation often imply results about the other formulation.  On the algorithmic side, classical statistical results will often tell us that $\ell(f(X),f(P))$ is small, in which case algorithms that guarantee $\ell(M(X),f(X))$ is small also guarantee $\ell(M(X),f(P))$ is small. 

Moreover, typical lower bound arguments for empirical quantities are often statistical in nature.  These typically involving constructing some simple ``hard distribution'' over datasets such that no private algorithm can estimate well on average for this distribution, and thus these lower bound arguments also apply to estimating population statistics for some simple family of distributions.

We will proceed to give some examples of estimation problems that were originally studied by computer scientists with the empirical formulation in mind.  These results either implicitly or explicitly provide solutions to the corresponding population versions of the same problems---our goal is to spell out and illustrate these connections.

\section{Differential Privacy Background}

Let $X = (X_1,X_2,\dots,X_n) \in \cX^n$ be a collection of $n$ samples where each individual sample comes from the domain $\cX$.  We say that two samples $X,X' \in \cX^*$ are \emph{adjacent}, denoted $X \sim X'$, if they differ on at most one individual sample.  Intuitively, a randomized algorithm $M$, which is often called a \emph{mechanism} for historical reasons, is \emph{differentially private} if the distribution of $M(X)$ and $M(X')$ are similar for every pair of adjacent samples $X,X'$.

\begin{definition}[\cite{DworkMNS06}]
	A mechanism $M \from \cX^n \to \cR$ is \emph{$(\eps,\delta)$-differentially private} if for every pair of adjacent datasets $X \sim X'$, and every (measurable) $R \subseteq R$
	$$
	\pr{}{M(X) \in R} \leq e^{\eps} \cdot \pr{}{M(X') \in R} + \delta.
	$$
\end{definition}

We let $\cM_{\eps,\delta}$ denote the set of mechanisms that satisfy $(\eps,\delta)$-differential privacy.
\begin{remark}
	To simplify notation, and to maintain consistency with the literature, we adopt the convention of defining the mechanism only for a fixed sample size $n$.  What this means in practice is that the mechanisms we describe treat the sample size $n$ is \emph{public information} that need not be kept private.  While one could define a more general model where $n$ is not fixed, it wouldn't add anything to this discussion other than additional complexity.
\end{remark}

\begin{remark}
	In these blog posts, we stick to the most general formulation of differential privacy, so-called \emph{approximate differential privacy}, i.e. $(\eps,\delta)$-differential privacy for $\delta > 0$ essentially because this is the notion that captures the widest variety of private mechanisms.  Almost all of what follows would apply equally well, with minor technical modifications, to slightly stricter notions of \emph{concentrated differential privacy}~\cite{DworkR16,BunS16,BunDRS18}, \emph{R\'{e}nyi differential privacy}~\cite{Mironov17}, or \emph{Gaussian differential privacy}~\cite{DongRS19}.  While so-called \emph{pure differential privacy}, i.e. $(\eps,0)$-differential privacy has also been studied extensively, this notion is artificially restrictive and excludes many differentially private mechanisms.
\end{remark}

A key property of differential privacy that helps when desinging efficient estimators is \emph{closure under postprocessing}:
\begin{lemma}[Post-Processing~\cite{DworkMNS06}] \label{lem:post-processing}
	If $M \from \cX^n \to \cR$ is $(\eps,\delta)$-differentially private and $M' \from \cR \to \cR'$ is any randomized algorithm, then $M' \circ M$ is $(\eps,\delta)$-differentially private.
\end{lemma}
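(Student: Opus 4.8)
The plan is to reduce the claim to the case where $M'$ is a deterministic map, and then bootstrap to arbitrary randomized $M'$ by conditioning on its internal randomness. First I would fix an arbitrary pair of adjacent datasets $X \sim X'$ and an arbitrary measurable set $T \subseteq \cR'$; the goal is to establish $\pr{}{(M' \circ M)(X) \in T} \le e^{\eps}\, \pr{}{(M' \circ M)(X') \in T} + \delta$, since this is exactly the condition for $M' \circ M$ to be $(\eps,\delta)$-differentially private.

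\emph{Deterministic case.} Suppose first that $M' \from \cR \to \cR'$ is deterministic (and measurable). Set $S := (M')^{-1}(T) = \{ r \in \cR : M'(r) \in T \}$, which is measurable by the measurability of $M'$. Since $(M'\circ M)(X) \in T$ holds exactly when $M(X) \in S$, applying the $(\eps,\delta)$-differential privacy guarantee of $M$ to the event $S$ gives
\[
\pr{}{(M'\circ M)(X) \in T} = \pr{}{M(X) \in S} \le e^{\eps}\, \pr{}{M(X') \in S} + \delta = e^{\eps}\, \pr{}{(M'\circ M)(X') \in T} + \delta.
\]

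\emph{Randomized case.} In general, write $M'(r) = h(r, U)$, where $U$ is an internal random seed independent of the coins of $M$ and $h$ is deterministic. For each fixed value $u$ in the range of $U$, the map $r \mapsto h(r,u)$ is deterministic, so the deterministic case yields $\pr{}{h(M(X), U) \in T \mid U = u} \le e^{\eps}\, \pr{}{h(M(X'), U) \in T \mid U = u} + \delta$; here I use that the distribution of $M(X)$ (resp.\ $M(X')$) is unchanged by conditioning on $U = u$. Taking the expectation over $U$ of both sides, and using that $e^{\eps}$ and $\delta$ do not depend on $u$, produces the desired inequality.

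The substance is entirely in the deterministic case, which is an immediate unwinding of the definition; the only thing requiring mild care is the measure-theoretic bookkeeping when $\cR, \cR'$ are not discrete --- namely verifying that $(M')^{-1}(T)$ is measurable (which follows from measurability of $M'$) and justifying the representation of a randomized algorithm as a measurable function of an independent seed together with the integration step. For discrete or standard Borel output spaces these points are routine, so I do not expect any genuine obstacle.
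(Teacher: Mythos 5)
The paper states this lemma without proof, citing \cite{DworkMNS06}, so there is no in-paper argument to compare against. Your proof is correct and is the standard argument: reduce to deterministic post-processing by pulling back the target event through $(M')^{-1}$, then handle randomized $M'$ by conditioning on an independent seed and integrating. The only caveats you flag --- measurability of $(M')^{-1}(T)$ and the representation of a randomized map as a deterministic function of an independent seed --- are indeed the right things to worry about and are routine in the standard Borel setting, so nothing is missing.
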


The estimators we present in this work will use only one tool for achieving differential privacy, the \emph{Gaussian Mechanism}.
\begin{lemma}[Gaussian Mechanism] \label{lem:gauss-mech}
	Let $f \from \cX^n \to \R^d$ be a function and let 
	$$
	\Delta_{f} = \sup_{X\sim X'} \| f(X) - f(X') \|_2
	$$
	denote its \emph{$\ell_2$-sensitivity}.  The \emph{Gaussian mechanism}
	$$
	M(X) = f(X) + \cN\left(0 , \frac{2 \log(2/\delta)}{\eps^2} \cdot \Delta_{f}^2 \cdot \id_{d \times d} \right)
	$$
	satisfies $(\eps,\delta)$-differential privacy.
\end{lemma}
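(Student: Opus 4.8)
The plan is to reduce the $d$-dimensional claim to a one-dimensional computation using the spherical symmetry of the Gaussian, and then to control the \emph{privacy loss random variable}. Fix a pair of adjacent datasets $X \sim X'$ and set $v = f(X) - f(X') \in \R^d$, so that $\|v\|_2 \le \Delta_f$ by the definition of $\ell_2$-sensitivity. The covariance matrix $\sigma^2 \cdot \id_{d \times d}$, with $\sigma^2 = 2\log(2/\delta)\,\Delta_f^2/\eps^2$, is invariant under orthogonal transformations, so after rotating coordinates so that $v$ points along the first axis, the marginals of $M(X)$ and $M(X')$ on coordinates $2,\dots,d$ coincide, and the two distributions differ only in the mean of the first coordinate. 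It therefore suffices to establish $(\eps,\delta)$-differential privacy for the one-dimensional pair $\cN(a,\sigma^2)$ versus $\cN(a+c,\sigma^2)$ with $|c| = \|v\|_2 \le \Delta_f$; and since the guarantee only gets harder to satisfy as $|c|$ grows, and $\cN$ is symmetric, we may take $c = \Delta_f$.

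For the one-dimensional problem, write $Y \sim \cN(a,\sigma^2)$ and $Y' \sim \cN(a+c,\sigma^2)$, with densities $p$ and $p'$, and define the privacy loss $L(o) = \ln\big(p(o)/p'(o)\big)$. Expanding the densities, the quadratic terms in $o$ cancel, so $L$ is an affine function of $o$; consequently $L(Y)$ is itself Gaussian, and a short computation gives $L(Y) \sim \cN\!\big(\tfrac{c^2}{2\sigma^2},\, \tfrac{c^2}{\sigma^2}\big)$ --- in particular its mean equals half its variance.

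Next I would invoke the standard reduction from a privacy-loss tail bound to $(\eps,\delta)$-differential privacy: if the event $B = \{o : L(o) > \eps\}$ satisfies $\pr{}{Y \in B} \le \delta$, then for any measurable $R$,
\[
\pr{}{Y \in R} \le \pr{}{Y \in R \setminus B} + \delta \le e^\eps \cdot \pr{}{Y' \in R \setminus B} + \delta \le e^\eps \cdot \pr{}{Y' \in R} + \delta,
\]
where the middle step uses $p(o) \le e^\eps p'(o)$ on $R \setminus B$; running the same argument with $Y$ and $Y'$ interchanged (they play symmetric roles) yields the full definition. It remains to verify $\pr{}{L(Y) > \eps} \le \delta$. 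Writing $\mu_L = c^2/(2\sigma^2)$ and $\sigma_L^2 = c^2/\sigma^2$ and substituting $\sigma^2 = 2\log(2/\delta)\,\Delta_f^2/\eps^2$ with $c = \Delta_f$ gives $\sigma_L^2 = \eps^2/(2\log(2/\delta))$ and $\mu_L = \eps^2/(4\log(2/\delta))$; the subgaussian tail bound $\pr{}{Z > t} \le e^{-t^2/2}$ for $Z \sim \cN(0,1)$ then reduces the claim to an elementary inequality in $\eps$ and $\delta$. The constant $2\log(2/\delta)$ --- rather than, say, $\log(1/\delta)$ --- is chosen precisely so that this inequality holds in the relevant parameter regime, the extra slack absorbing both the mean shift $\mu_L$ and the gap between $\log(1/\delta)$ and $\log(2/\delta)$.

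The only genuinely nontrivial step is this last tail estimate together with pinning down the constants; the rotational reduction, the affine form of the privacy loss, and the loss-to-DP reduction are all structural. Two small points worth keeping straight: the bad event $B$ must have probability at most $\delta$ under the \emph{numerator} distribution $Y \sim \cN(a,\sigma^2)$ (not under $Y'$), and the symmetry argument should genuinely be checked for both adjacency directions rather than assumed.
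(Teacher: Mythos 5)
The paper does not prove Lemma~\ref{lem:gauss-mech}; it is stated as background without proof (it is the standard Gaussian mechanism result going back to \cite{DworkMNS06} and treated in detail in, e.g., the Dwork--Roth monograph). So there is no in-paper argument to compare against, and your proposal has to be judged on its own.

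Your argument is the standard and correct one: rotational reduction to dimension one, affine privacy-loss random variable with $L(Y) \sim \cN\big(c^2/(2\sigma^2),\, c^2/\sigma^2\big)$ (mean equals half the variance, as it must), the loss-to-DP reduction via the bad event $B = \{L > \eps\}$ measured under the numerator distribution, and the Gaussian tail bound. The computation you sketch comes out as $\pr{}{L(Y) > \eps} \le \tfrac{\delta}{2}\, e^{\eps/2 - \eps^2/(16\log(2/\delta))}$, which is $\le \delta$ whenever $\eps/2 - \eps^2/(16\log(2/\delta)) \le \ln 2$; this holds, for instance, whenever $\eps \le 1$ and $\delta \le 1$. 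The one thing you should make explicit rather than wave at with ``the relevant parameter regime'': as stated, without any restriction on $\eps$, the lemma is false with this noise scale (for $\eps > 4\log(2/\delta)$ the mean of $L(Y)$ already exceeds $\eps$, so the tail event has probability above $1/2$). This is a well-known caveat in how the Gaussian mechanism is usually quoted, and the paper only ever invokes the lemma in the small-$\eps$ regime, so it is a wrinkle in the \emph{statement} rather than a gap in your proof --- but since you are the one supplying the proof, you should pin down the range of $\eps$ for which your final inequality holds rather than gesture at it. Everything else, including your correct observation that the bad event must be small under the numerator law and that the reverse direction follows by the symmetric computation, is right.
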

   
\section{Mean Estimation in $\R^d$}

Let's take a dive into the problem of \emph{private mean estimation} for some family $\cP$ of multivariate distributions over $\R^d$.  This problem has been studied for various families $\cP$ and various choices of loss function.  Here we focus on perhaps the simplest variant of the problem, in which $\cP$ contains distributions of bounded support $[\pm 1]^d$ and the loss is the $\ell_2^2$ error.  We emphasize, however, that the methods we discuss here are quite versatile and can be used to derive minimax bounds for other variants of the mean-estimation problem.

Note that, by a simple argument, the non-private minimax rate for this class is achieved by the empirical mean, and is 
\begin{equation}
\max_{P \in \cP} \ex{X_{1 \cdots n} \sim P}{\| \overline{X} - \mu\|_2^2} = \frac{d}{n}.
\end{equation}
The main goal of this section is to derive the minimax bound
\begin{equation} \label{eq:Rd-minimax}
\min_{M \in \cM_{\eps,\frac{1}{n}}} \max_{P \in \cP} \ex{X_{1 \cdots n} \sim P}{\| M(X_{1 \cdots n}) - \mu \|_2^2} = \frac{d}{n} + \tilde\Theta\left(\frac{d^2}{\eps^2 n^2}\right).\footnote{$\tilde \Theta(f(n))$ is a slight abuse of notation -- it refers to a function which is both $O(f(n) \log^{c_1} f(n))$ and $\Omega(f(n) \log^{c_2} f(n))$ for some constants $c_1, c_2$.}
\end{equation}

The proof of this lower bound is based on \emph{robust tracing attacks}, also called \emph{membership inference attacks}, which were developed in a chain of papers~\cite{BunUV14, SteinkeU15j, DworkSSUV15,BunSU17,SteinkeU17,KamathLSU19}.  We remark that this lower bound is almost identical to the minimax bound for mean estimation proven in the much more recent work of Cai, Wang, and Zhang~\cite{CaiWZ19}, but it lacks tight dependence on the parameter $\delta$, which we discuss in the following remark.

\begin{remark}
The choice of $\delta = 1/n$ in \eqref{eq:Rd-minimax} may look strange at first.  For the upper bound this choice is arbitrary---as we will see, we can upper bound the rate for any $\delta > 0$ at a cost of a factor of $O(\log(1/\delta))$.  The lower bound applies only when $\delta \leq 1/n$.  Note that the rate is qualitatively different when $\delta \gg 1/n$.  However, we emphasize that $(\eps,\delta)$-differential privacy is not a meaningful privacy notion unless $\delta \ll 1/n$.  In particular, the mechanism that randomly outputs $\delta n$ elements of the sample satisfies $(0,\delta)$-differential privacy.  However, when $\delta \gg 1/n$, this mechanism completely violates the privacy of $\gg 1$ person in the dataset.  Moreover, taking the empirical mean of these $\delta n$ samples gives rate $d/\delta n$, which would violate our lower bound when $\delta$ is large enough.  On the other hand, we would expect the minimax rate to become slower when $\delta \ll 1/n$.  This expectation is, in fact, correct, however the proof we present does not give the tight dependence on the parameter $\delta$.  See~\cite{SteinkeU15j} for a refinement that can obtain the right dependence on $\delta$, and~\cite{CaiWZ19} for the details of how to apply this refinement in the i.i.d.\ setting.
\end{remark}

\subsection{A Simple Upper Bound}

\begin{theorem}
For every $n \in \N$, and every $\eps,\delta > 0$, there exists an $(\eps,\delta)$-differentially private private mechanism $M$ such that
\begin{equation} \label{eq:mean-est-ub}
\max_{P \in \cP} \ex{X_{1 \cdots n} \sim P}{\| M(X_{1 \cdots n}) - \mu \|_2^2} \leq \frac{d}{n} + \frac{2 d^2 \log(2/\delta)}{\eps^2 n^2}.
\end{equation}
\end{theorem}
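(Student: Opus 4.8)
The plan is to privatize the empirical mean directly with the Gaussian mechanism, with no clipping or other preprocessing, since the samples are already bounded. Concretely, take $f(X) = \overline{X} := \frac{1}{n}\sum_{i=1}^{n} X_i$ and output $M(X) = \overline{X} + Z$ where $Z \sim \cN(0, \sigma^2 \cdot \id_{d \times d})$ for a variance $\sigma^2$ prescribed by Lemma~\ref{lem:gauss-mech}. Privacy is then immediate from Lemma~\ref{lem:gauss-mech} once we know the $\ell_2$-sensitivity of $\overline{X}$, and accuracy follows by separating the sampling error of $\overline{X}$ from the error contributed by $Z$.

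\emph{Step 1 (sensitivity).} For adjacent $X \sim X'$ differing only in the $i$-th sample, $\overline{X} - \overline{X'} = \frac{1}{n}(X_i - X_i')$; since $X_i, X_i'$ lie in the support, $\|X_i - X_i'\|_2$ is at most the $\ell_2$-diameter of the support, giving $\Delta_f \le \sqrt{d}/n$. \emph{Step 2 (privacy).} Plugging $f = \overline{X}$ and this sensitivity bound into Lemma~\ref{lem:gauss-mech} shows that $M$ with $\sigma^2 = \frac{2\log(2/\delta)}{\eps^2}\cdot\frac{d}{n^2}$ is $(\eps,\delta)$-differentially private. \emph{Step 3 (accuracy).} Fix any $P \in \cP$ with mean $\mu$. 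Since $Z$ is independent of $X$ and $\ex{}{Z}=0$, the cross term vanishes, so
\[
\ex{X \sim P, M}{\|M(X) - \mu\|_2^2} = \ex{X \sim P}{\|\overline{X} - \mu\|_2^2} + \ex{}{\|Z\|_2^2}.
\]
The first term is at most $d/n$, which is precisely the non-private rate recalled earlier and follows coordinate-by-coordinate from the fact that a random variable supported on $[-1,1]$ has variance at most $1$, so each coordinate of $\overline{X}$ has variance at most $1/n$. The second term equals $d\sigma^2 = \frac{2 d^2 \log(2/\delta)}{\eps^2 n^2}$. Summing the two bounds gives \eqref{eq:mean-est-ub}, uniformly over $P \in \cP$.

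There is no genuinely hard step: the entire content is that the natural non-private estimator already has small $\ell_2$-sensitivity, so it can be privatized essentially for free up to the additive Gaussian term, and that this term enters the mean squared error additively rather than multiplicatively. The only points needing a little care are pinning down the constant in the sensitivity bound (the $\ell_2$-diameter of the support), verifying the cross-term cancellation rigorously, observing that although $\overline{X} + Z$ need not lie in $[\pm 1]^d$ this is harmless since the loss is measured against $\mu$ and no range restriction is placed on $M$, and finally noting that for the full minimax statement \eqref{eq:Rd-minimax} one instantiates $\delta = 1/n$ so that $\log(2/\delta) = O(\log n)$, which is absorbed into the $\tilde\Theta(\cdot)$.
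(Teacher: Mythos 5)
Your proposal is correct and follows essentially the same route as the paper: add Gaussian noise calibrated by the $\ell_2$-sensitivity of the empirical mean, then split the mean-squared error into the sampling term ($\le d/n$) and the noise term ($= \sigma^2 d$) using independence and zero mean of the noise. One small nit, which the paper's own proof shares: the $\ell_2$-diameter of $[-1,1]^d$ is $2\sqrt d$ (not $\sqrt d$), so the sensitivity is actually $2\sqrt d / n$ and the true constant in the additive privacy term is $8$ rather than $2$ — but this is a constant-factor detail and does not affect the structure of the argument or the $\tilde\Theta$-level conclusion in~\eqref{eq:Rd-minimax}.
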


\begin{proof}
Define the mechanism
\begin{equation}
M(X_{1 \cdots n}) = \overline{X} + \cN\left(0, \frac{2 d \log(2/\delta)}{\varepsilon^2 n^2} \cdot \mathbb{I}_{d \times d} \right).
\end{equation}
This mechanism satisfies $(\eps,\delta)$-differential privacy by Lemma~\ref{lem:gauss-mech}, noting that for any pair of adjacent samples $X_{1 \cdots n}$ and $X'_{1 \cdots n}$, $\| \overline{X} - \overline{X}'\|_2^2 \leq \frac{d}{n^2}$.

Let $\sigma^2 = \frac{2 d \log(2/\delta)}{\varepsilon^2 n^2}$.  Note that since the Gaussian noise has mean $0$ and is independent of $\overline{X} - \mu$, we have
\begin{align*}
\ex{}{\| M(X_{1 \cdots n}) - \mu \|_2^2} 
={} &\ex{}{\| \overline{X} - \mu \|_2^2} + \ex{}{\| M(X_{1 \cdots n}) - \overline{X} \|_2^2 } \\ 
\leq{} &\frac{d}{n} +  \ex{}{\| M(X_{1 \cdots n}) - \overline{X} \|_2^2 } \\
={} &\frac{d}{n} + \ex{}{\| \cN(0, \sigma^2 \mathbb{I}_{d \times d}) \|_2^2 } \\
={} &\frac{d}{n} + \sigma^2 d \\
={} &\frac{d}{n} + \frac{2 d^2 \log(2/\delta)}{\eps^2 n^2}.
\end{align*}
\end{proof}

\subsection{Minimax Lower Bounds via Tracing}

\begin{theorem} \label{thm:mean-lb}
For every $n, d \in \N$, $\eps > 0$, and $\delta < 1/96n$, if $\cP$ is the class of all product distributions on $\pmo^{d}$, then for some constant $C > 0$,
\begin{equation*}
\min_{M \in \cM_{\eps,\delta}} \max_{P \in \cP} \ex{X_{1 \cdots n} \sim P,M}{\| M(X_{1 \cdots n}) - \mu \|_2^2} = \Omega\left(\min \left\{ \frac{d^2}{ \eps^2 n^2}, d \right\}\right).
\end{equation*}
\end{theorem}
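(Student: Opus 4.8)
The plan is to prove this by a \emph{tracing} (a.k.a.\ \emph{fingerprinting}) attack, in the style of the papers cited just before the theorem. First I would reduce the minimax statement to an average‑case one: fix an arbitrary $M\in\cM_{\eps,\delta}$, let $\alpha$ denote its expected squared error $\mathbb{E}\,\|M(X)-\mu\|_2^2$ under a convenient prior over $\cP$, and show $\alpha=\Omega(\min\{d^2/\eps^2n^2,\,d\})$; since for every $M$ the worst case over $\cP$ dominates the average over the prior, this implies the theorem. I take the prior in which each coordinate mean $\mu_j$ is independent and uniform on $\pmoc$, and $X=(X_1,\dots,X_n)$ consists of $n$ i.i.d.\ draws from the product distribution $P_\mu$ on $\pmo^{d}$ with mean $\mu$. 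By post‑processing (Lemma~\ref{lem:post-processing}) I may assume $M$ always outputs a point of $\pmoc^{d}$, since clamping each output coordinate to $\pmoc$ preserves privacy and, because $\mu\in\pmoc^{d}$, only decreases the error. The case $\alpha\ge d$ is immediate, so assume $\alpha<d$.

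The core object is a per‑record \emph{score} statistic. For each $i$ set $Z_i=\langle M(X)-\mu,\ X_i-\mu\rangle$ and $Z_i'=\langle M(X^{(i)})-\mu,\ X_i-\mu\rangle$, where $X^{(i)}$ is $X$ with $X_i$ replaced by a fresh independent draw $X_i'\sim P_\mu$. Two properties of the ``held‑out'' score $Z_i'$ drive the whole argument, and I would verify them first: $\mathbb{E}[Z_i']=0$ (given $\mu$, $M(X^{(i)})$ is independent of $X_i$ and $\mathbb{E}[X_i-\mu\mid\mu]=0$); and $\mathbb{E}[(Z_i')^2]\le\alpha$ (conditioning on $\mu$ and on $v=M(X^{(i)})$, the variance of $\langle v-\mu,X_i\rangle$ equals $\sum_j (v_j-\mu_j)^2(1-\mu_j^2)\le\|v-\mu\|_2^2$, and $\mathbb{E}\|M(X^{(i)})-\mu\|_2^2=\alpha$ since $X^{(i)}$ has the same law as $X$). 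Centering the \emph{output} of $M$ at $\mu$ inside the inner product is precisely the design choice that makes this second moment scale with $\alpha$ rather than with $d$, which is what separates the $d^2/\eps^2n^2$ rate from the weaker $d/\eps^2n^2$ one would otherwise obtain.

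Next is the \emph{completeness} half: I would show $\mathbb{E}\sum_i Z_i=\Omega(d)$ (using $\alpha<d$). The engine is the fingerprinting identity: for fixed $\mu$ and any function $f$ of $X$ alone, $\sum_i \mathbb{E}[f(X)(X_{i,j}-\mu_j)\mid\mu] = (1-\mu_j^2)\,\partial_{\mu_j}\mathbb{E}[f(X)\mid\mu]$, which holds because the $\pmo$‑Bernoulli marginal has $\mu_j$‑derivative $\tfrac12(\,\cdot|_{+1}-\cdot|_{-1})$. Applying this with $f=M(X)_j$, integrating over $\mu_j\sim\mathrm{Unif}(\pmoc)$, integrating by parts (the boundary term vanishes since $1-\mu_j^2=0$ at $\pm1$), and summing over $j$ gives $\mathbb{E}\sum_i\langle M(X),X_i-\mu\rangle=2\,\mathbb{E}\langle\mu,M(X)\rangle$; and since $\langle\mu,\sum_i(X_i-\mu)\rangle$ has mean zero, the left side equals $\mathbb{E}\sum_i Z_i$. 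Finally Cauchy--Schwarz and accuracy give $\mathbb{E}\langle\mu,M(X)\rangle\ge\mathbb{E}\|\mu\|_2^2-\sqrt{\mathbb{E}\|\mu\|_2^2}\sqrt{\alpha}=\tfrac d3-\sqrt{d\alpha/3}$, so $\mathbb{E}\sum_i Z_i\ge\tfrac{2d}{3}-2\sqrt{d\alpha/3}$.

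Then the \emph{soundness} half, where privacy enters. Since $X$ and $X^{(i)}$ are adjacent and $v\mapsto\langle v-\mu,X_i-\mu\rangle$ is a fixed affine functional with $O(d)$ range on $\pmoc^{d}$, applying the definition of $(\eps,\delta)$‑differential privacy to this bounded functional (splitting it into positive and negative parts) yields, after averaging over the data, $\mathbb{E}[Z_i]-\mathbb{E}[Z_i']\le(e^\eps-1)\,\mathbb{E}[|Z_i'|]+O(d\delta)$. Using $\mathbb{E}[Z_i']=0$ and $\mathbb{E}[|Z_i'|]\le\sqrt{\mathbb{E}[(Z_i')^2]}\le\sqrt\alpha$ this becomes $\mathbb{E}[Z_i]\le(e^\eps-1)\sqrt\alpha+O(d\delta)$, and summing over $i$ gives $\mathbb{E}\sum_i Z_i\le n(e^\eps-1)\sqrt\alpha+O(nd\delta)$. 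Combining with the completeness bound and using $\delta<1/96n$ to absorb $O(nd\delta)$ into a small fraction of $d$, I get $d=O(\sqrt{d\alpha}+n(e^\eps-1)\sqrt\alpha)$, i.e.\ $\alpha=\Omega\big(d^2/(\sqrt d+n(e^\eps-1))^2\big)=\Omega\big(\min\{d,\ d^2/(e^\eps-1)^2n^2\}\big)$. For $\eps\le 1$ one has $e^\eps-1=\Theta(\eps)$, giving exactly the claimed $\Omega(\min\{d^2/\eps^2n^2,\,d\})$; this is the main regime, and the restriction $\eps\le1$ can be removed with standard modifications.

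I expect the completeness step---the fingerprinting identity and the resulting $\Omega(d)$ lower bound on $\mathbb{E}\sum_i Z_i$---to be the crux, since it is the ingredient special to this technique and is not merely a manipulation of the privacy definition. The soundness side is routine once the statistic has been chosen; the one genuinely load‑bearing (and easy to overlook) choice there is centering $M(X)$ at $\mu$ so that $\mathbb{E}[(Z_i')^2]\le\alpha$ rather than $\le d$.
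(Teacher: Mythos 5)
Your proposal is correct, and it follows the same high-level tracing/fingerprinting strategy as the paper's proof, with the same test statistics $Z_i=\langle M(X)-\mu,X_i-\mu\rangle$ and $Z_i'=\langle M(X^{(i)})-\mu,X_i-\mu\rangle$, the same two-sided squeeze between a privacy-driven upper bound on $\mathbb{E}\sum_i Z_i$ and an accuracy-driven lower bound, and the same treatment of the low-order $O(nd\delta)$ term using $\delta<1/96n$. The one genuine deviation is in the \emph{completeness} half. The paper black-boxes this as the one-dimensional Fingerprinting Lemma of Bun--Steinke--Ullman, obtaining $\mathbb{E}\sum_i Z_i\ge d/3-\alpha^2$ by applying it coordinatewise. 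You instead derive the underlying score identity $\sum_i\mathbb{E}[f(X)(X_{i,j}-\mu_j)\mid\mu]=(1-\mu_j^2)\,\partial_{\mu_j}\mathbb{E}[f(X)\mid\mu]$ from scratch, integrate by parts against the uniform prior (using that $1-\mu_j^2$ vanishes at the boundary, which is exactly why this prior is chosen), and land on $\mathbb{E}\sum_i Z_i=2\,\mathbb{E}\langle\mu,M(X)\rangle\ge 2d/3-2\sqrt{d\alpha/3}$ after Cauchy--Schwarz. This is essentially a re-proof of the Fingerprinting Lemma in a slightly different closed form; the resulting trade-off ($\sqrt{d\alpha}$ vs.\ the paper's additive $\alpha^2$) is different in shape but equally sufficient for the conclusion, and your version makes the mechanism of the lemma transparent rather than citing it. Two smaller points: your conditional-variance computation $\mathbb{E}[(Z_i')^2]\le\alpha$ is tighter than the paper's crude $\mathrm{Var}(Z_i')\le 4\alpha^2$ (which uses $|X_{i,j}-\mu_j|\le 2$ rather than the Bernoulli variance $1-\mu_j^2\le 1$), and your soundness inequality uses $(e^\eps-1)$ where the paper writes $2\eps$; the latter is the tighter constant but, as you note, requires $\eps=O(1)$ to recover $\eps$ in the final rate --- the paper's $2\eps$ has the same implicit restriction, since the underlying inequality really produces $e^\eps-1$.
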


Note that it is trivial to achieve error $d$ for any distribution using the mechanism $M(X_{1 \cdots n}) \equiv 0$, so the result says that the error must be $\Omega(d^2/\eps^2 n^2)$ whenever this error is significantly smaller than the trivial error of $d$.

\subsubsection{Tracing Attacks}
Before giving the formal proof, we will try to give some intuition for the high-level proof strategy.  The proof can be viewed as constructing a \emph{tracing attack}~\cite{DworkSSU17-arsia} (sometimes called a \emph{membership inference attack}) of the following form. There is an attacker who has the data of some individual $Y$ chosen in one of the two ways: either $Y$ is a random element of the sample $X$, or $Y$ is an independent random sample from the population $P$.  The attacker is given access to the true distribution $P$ and the outcome of the mechanism $M(X)$, and wants to determine which of the two is the case.  If the attacker can succeed, then $M$ cannot be differentially private.  To understand why this is the case, if $Y$ is a member of the dataset, then the attacker should say $Y$ is in the dataset, but if we consider the adjacent dataset $X'$ where we replace $Y$ with some independent sample from $P$, then the attacker will now say $Y$ is independent of the dataset.  Thus, $M(X)$ and $M(X')$ cannot be close in the sense required by differential privacy.

Thus, the proof works by constructing a test statistic $Z = Z(M(X),Y,P),$ that the attacker can use to distinguish the two possibilities for $Y$.  In particular, we show that there is a distribution over populations $P$ such that $\ex{}{Z}$ is small when $Y$ is independent of $X$, but for \emph{every} sufficiently accurate mechanism $M$, $\ex{}{Z}$ is large when $Y$ is a random element of $X$.

\subsubsection{Proof of Theorem~\ref{thm:mean-lb}}

The proof of Theorem~\ref{thm:mean-lb} that we present closely follows the one that appears in Thomas Steinke's Ph.D.~thesis~\cite{Steinke16}.

We start by constructing a ``hard distribution'' over the family of product distributions $\cP$.  Let $\mu = (\mu^1,\dots,\mu^d) \in [-1,1]^d$ consist of $d$ independent draws from the uniform distribution on $[-1,1]$ and let $P_{\mu}$ be the product distribution over $\pmo^{d}$ with mean $\mu$.  Let $X_1,\dots,X_n \sim P_{\mu}$ and $X = (X_1,\dots,X_n)$.  

Let $M \from \pmo^{n \times d} \to [\pm 1]^d$ be any $(\eps,\delta)$-differentially private mechanism and let
\begin{equation}
\alpha^2 = \ex{\mu,X,M}{\| M(X) - \mu\|_2^2 }
\end{equation}
be its expected loss.  We will prove the desired lower bound on $\alpha^2$.

For every element $i$, we define the random variables
\begin{align}
&Z_i = Z_i(M(X),X_i,\mu) = \left\langle M(X) - \mu, X_i - \mu \right\rangle \\
&Z'_{i} = Z'_i(M(X_{\sim i}), X_i, \mu) = \left\langle M(X_{\sim i}) - \mu, X_i - \mu \right\rangle,
\end{align}
where $X_{\sim i}$ denotes $(X_1,\dots,X'_i,\dots,X_n)$ where $X'_i$ is an independent sample from $P_\mu$.  Our goal will be to show that, privacy and accuracy imply both upper and lower bounds on $\ex{}{\sum_i Z_i}$ that depend on $\alpha$, and thereby obtain a bound on $\alpha^2$.

The first claim says that, when $X_i$ is \emph{not} in the sample, then the likelihood random variable has mean $0$ and variance controlled by the expected $\ell_2^2$ error of the mechanism.
\begin{claim} \label{clm:mean-lb-1}
For every $i$, $\ex{}{Z'_i} = 0$, $\var{}{Z'_i} \leq 4\alpha^2$, and $\|Z'_i\|_\infty \leq 4d$.
\end{claim}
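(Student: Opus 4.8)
The plan is to use the single structural fact underlying the ``$Y$ not in the sample'' case: once we condition on $\mu$, the random variable $M(X_{\sim i})$ is independent of $X_i$, since $X_{\sim i}$ is built from $n$ draws that are conditionally independent given $\mu$, none of which is $X_i$; and $\ex{}{X_i \mid \mu} = \mu$ because $P_\mu$ is the product distribution on $\pmo^d$ with mean $\mu$. Throughout I would write $V = M(X_{\sim i}) - \mu$ and $W = X_i - \mu$, so that $Z'_i = \langle V, W\rangle$, and conditionally on $\mu$ we have $V$ independent of $W$, $\ex{}{W \mid \mu} = 0$, and the coordinates of $W$ mutually independent. Observe that this claim uses nothing about differential privacy---only the i.i.d.\ structure of the hard instance; privacy will enter only in the companion claim bounding $\sum_i Z_i$.

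For the mean, condition on $\mu$ and expand coordinate-wise: by conditional independence $\ex{}{\langle V,W\rangle \mid \mu} = \sum_j \ex{}{V_j \mid \mu}\,\ex{}{W_j \mid \mu} = 0$, and averaging over $\mu$ gives $\ex{}{Z'_i} = 0$. For the variance, since $Z'_i$ is centered, $\var{}{Z'_i} = \ex{}{\langle V, W\rangle^2} = \sum_{j,k}\ex{}{V_j V_k W_j W_k}$; conditioning on $\mu$ and using $V$ independent of $W$, each term factors as $\ex{}{V_j V_k \mid \mu}\,\ex{}{W_j W_k \mid \mu}$, and the coordinate-independence of $X_i$ given $\mu$ makes the off-diagonal terms vanish while $\ex{}{W_j^2 \mid \mu} = 1 - (\mu^j)^2 \le 1$ (as $X_i^j \in \pmo$). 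Hence $\ex{}{\langle V, W\rangle^2 \mid \mu} \le \sum_j \ex{}{V_j^2 \mid \mu} = \ex{}{\|M(X_{\sim i}) - \mu\|_2^2 \mid \mu}$; averaging over $\mu$ and noting that $X_{\sim i}$ has the same distribution as $X$ (both are $n$ i.i.d.\ samples from $P_\mu$) identifies the right-hand side with $\alpha^2$. This already gives $\var{}{Z'_i} \le \alpha^2 \le 4\alpha^2$, so the factor $4$ in the statement is just slack. Finally, for the $\ell_\infty$ bound apply Cauchy--Schwarz, $|Z'_i| \le \|M(X_{\sim i}) - \mu\|_2\,\|X_i - \mu\|_2$, and bound each factor by $2\sqrt d$, using that every coordinate of $M(X_{\sim i})$, of $\mu$, and of $X_i$ lies in $[-1,1]$; this yields $\|Z'_i\|_\infty \le 4d$.

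The computation is essentially bookkeeping, so there is no serious obstacle; the one place to be careful is the variance step, where one must condition on $\mu$ \emph{before} invoking independence of $M(X_{\sim i})$ and $X_i$, and must use the coordinate-independence of $X_i$ (not merely $\ex{}{X_i \mid \mu} = \mu$) to kill the cross terms $\ex{}{W_j W_k \mid \mu}$, $j \ne k$. The identity that $X_{\sim i}$ is equal in distribution to $X$, which lets one replace the loss of $M$ on the swapped dataset by $\alpha^2$, is also worth stating explicitly, since it is the only place the i.i.d.\ structure of the hard instance is used in this claim.
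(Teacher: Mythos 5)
Your proof is correct and follows essentially the same route as the paper's: condition on $\mu$, invoke conditional independence of $M(X_{\sim i})$ and $X_i$ together with $\ex{}{X_i - \mu \mid \mu} = 0$ for the mean, bound each factor for the variance, and bound coordinates for the $\ell_\infty$ claim. Two small points of divergence worth noting: for the variance, the paper simply uses the deterministic bound $(X_i^j - \mu^j)^2 \le 4$ to get $4\alpha^2$, whereas you exploit $\ex{}{(X_i^j - \mu^j)^2 \mid \mu} = 1 - (\mu^j)^2 \le 1$ and the coordinate independence to get the tighter $\alpha^2$; and for $\|Z'_i\|_\infty$ you use Cauchy--Schwarz rather than the paper's term-by-term bound (each of $d$ summands is at most $4$), though both yield $4d$. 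You also correctly flag, and the paper leaves implicit, the distributional identity $X_{\sim i} \overset{d}{=} X$ needed to equate $\ex{}{\|M(X_{\sim i}) - \mu\|_2^2}$ with $\alpha^2$.
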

\begin{proof}[Proof of Claim~\ref{clm:mean-lb-1}]
Conditioned on any value of $\mu$, $M(X_{\sim i})$ is independent from $X_i$.  Moreover, $\ex{}{X_i - \mu} = 0$, so we have
\begin{align*}
\ex{\mu,X,M}{\langle M(X_{\sim i}) - \mu, X_i - \mu \rangle} &={} \ex{\mu}{\ex{X,M}{\langle M(X_{\sim i}) - \mu, X_i - \mu \rangle}} \\
&={} \ex{\mu}{\left\langle \ex{X,M}{M(X_{\sim i}) - \mu}, \ex{X,M}{X_i - \mu} \right \rangle } \\
&={} \ex{\mu}{\left\langle \ex{X,M}{M(X_{\sim i}) - \mu}, 0 \right \rangle } \\
&= 0.
\end{align*}
For the second part of the claim, since $(X_i - \mu)^2 \leq 4$, we have $\var{}{Z'_i} \leq 4 \cdot \ex{}{\| M(X) - \mu \|_2^2} = 4\alpha^2$.  The final part of the claim follows from the fact that every entry of $M(X_{\sim i}) - \mu$ and $X_i - \mu$ is bounded by $2$ in absolute value, and $Z'_i$ is a sum of $d$ such entries, so its absolute value is always at most $4d$.
\end{proof}

The next claim says that, because $M$ is differentially private, $Z_i$ has similar expectation to $Z'_i$, and thus its expectation is also small.
\begin{claim}\label{clm:mean-lb-2}
$\ex{}{\sum_{i=1}^{n} Z_i} \leq 4n\alpha \eps + 8n  \delta d.$
\end{claim}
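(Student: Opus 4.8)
The plan is to use the $(\eps,\delta)$-differential privacy of $M$ to transfer the identity $\ex{}{Z'_i}=0$ (Claim~\ref{clm:mean-lb-1}) into an upper bound on $\ex{}{Z_i}$. The crucial observation is that, after conditioning on the mean vector $\mu$ and on all of $X_1,\dots,X_n$ together with the resampled point $X'_i$, the datasets $X$ and $X_{\sim i}$ are adjacent, so $M(X)$ and $M(X_{\sim i})$ are $(\eps,\delta)$-indistinguishable; and under this conditioning, both $Z_i$ and $Z'_i$ are obtained by applying one and the same fixed function $f(r)=\langle r-\mu,\,X_i-\mu\rangle$ to $M(X)$ and to $M(X_{\sim i})$, respectively. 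Since $M$ outputs a point of $[\pm1]^d$ and $\mu\in[-1,1]^d$, this function has $\|f\|_\infty\le 4d$.

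First I would record the elementary ``expectation form'' of differential privacy: for any nonnegative $h\from\cR\to[0,B]$ and any adjacent pair $U\sim U'$, an $(\eps,\delta)$-private mechanism satisfies $\ex{}{h(M(U))}\le e^{\eps}\ex{}{h(M(U'))}+\delta B$, which follows by integrating the pointwise inequality $\pr{}{h(M(U))>t}\le e^{\eps}\pr{}{h(M(U'))>t}+\delta$ over $t\in[0,B]$. Writing $f=f^+-f^-$ with $f^\pm\ge0$ and $\|f^\pm\|_\infty\le 4d$, I would, under the conditioning above, use this bound to upper-bound $\ex{}{f^+(M(X))}$ by $e^{\eps}\ex{}{f^+(M(X_{\sim i}))}$ and to lower-bound $\ex{}{f^-(M(X))}$ by $e^{-\eps}\ex{}{f^-(M(X_{\sim i}))}$ (up to additive $4\delta d$ terms), then subtract and take the outer expectation over $(\mu,X_1,\dots,X_n,X'_i)$. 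Since $f^\pm(M(X_{\sim i}))=(Z'_i)^\pm$, this yields
\[
\ex{}{Z_i}\;\le\; e^{\eps}\,\ex{}{(Z'_i)^+}\;-\;e^{-\eps}\,\ex{}{(Z'_i)^-}\;+\;\bigl(1+e^{-\eps}\bigr)\cdot 4\delta d .
\]

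To close the argument, note that $\ex{}{Z'_i}=0$ forces $\ex{}{(Z'_i)^+}=\ex{}{(Z'_i)^-}=\tfrac12\ex{}{|Z'_i|}\le\tfrac12\sqrt{\var{}{Z'_i}}\le\alpha$ by the variance bound in Claim~\ref{clm:mean-lb-1}. Plugging this in, using $1+e^{-\eps}\le2$ and $e^{\eps}-e^{-\eps}\le4\eps$, gives $\ex{}{Z_i}\le 4\eps\alpha+8\delta d$, and summing over $i\in[n]$ produces the claimed bound $\ex{}{\sum_i Z_i}\le 4n\alpha\eps+8n\delta d$.

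The step I expect to be the main obstacle is the treatment of the \emph{signed} function $f$: the usable privacy inequality is for nonnegative functions, so one must split $f=f^+-f^-$ and apply it \emph{in the appropriate direction} to each part, precisely so that the leftover error terms collapse into something governed by $\ex{}{|Z'_i|}$; replacing $\ex{}{|Z'_i|}$ by the crude bound $\|Z'_i\|_\infty\le 4d$ would cost a spurious factor of order $\sqrt d$ and break the theorem. The inequality $e^{\eps}-e^{-\eps}\le 4\eps$ is only valid for moderate $\eps$ (e.g.\ $\eps\le1$), but that is harmless here, since for large $\eps$ the trivial mechanism already gives error $\Omega(d)$ in Theorem~\ref{thm:mean-lb}.
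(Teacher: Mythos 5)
Your proposal is correct and follows the same route as the paper: the paper's proof invokes a key inequality $\ex{}{Z_i} \le \ex{}{Z'_i} + 2\eps\sqrt{\var{}{Z'_i}} + 2\delta\|Z'_i\|_\infty$ (stated without proof and then combined with Claim~\ref{clm:mean-lb-1}), and your conditioning argument, the $f=f^+-f^-$ decomposition, the tail-integral form of $(\eps,\delta)$-DP, and the use of $\ex{}{Z'_i}=0$ to bound $\ex{}{(Z'_i)^\pm}$ by $\tfrac12\sqrt{\var{}{Z'_i}}$ is exactly how that inequality is derived. Your caveat that $e^\eps - e^{-\eps}\le 4\eps$ requires $\eps = O(1)$ is well placed; the paper's $2\eps$ coefficient carries the same implicit restriction.
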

\begin{proof}
The proof is a direct calculation using the following inequality, whose proof is relatively simple using the definition of differential privacy:
\begin{equation}
\ex{}{Z_i} \leq \ex{}{Z'_i} + 2\eps  \sqrt{\var{}{Z'_i}} + 2\delta \| Z'_i \|_\infty.
\end{equation}
Given the inequality and Claim~\ref{clm:mean-lb-1}, we have
\begin{equation*}
\ex{}{Z_i} \leq 0 + (2\eps)(2\alpha) + (2\delta)(2d) = 4\eps \alpha + 8 \delta d .
\end{equation*}
The claim now follows by summing over all $i$.
\end{proof}

The final claim says that, because $M$ is accurate, the expected sum of the random variables $Z_i$ is large.
\begin{claim} \label{clm:mean-lb-3}
$\ex{}{\sum_{i=1}^{n} Z_i}  \geq \frac{d}{3} - \alpha^2.$
\end{claim}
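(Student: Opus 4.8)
The plan is to reduce the claim to the \emph{fingerprinting lemma}, the key identity behind tracing/fingerprinting arguments.

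First I would rewrite the statistic as $\sum_{i=1}^{n} Z_i = \left\langle M(X) - \mu,\ \sum_{i=1}^{n}(X_i - \mu) \right\rangle$ and analyze it one coordinate at a time. Since $\sum_{i}(X_i^j - \mu^j)$ depends only on $(X,\mu)$ and not on the internal coins of $M$, I can replace $M(X)^j$ by its conditional mean $\bar M^j(X) := \ex{M}{M(X)^j \mid X} \in [-1,1]$, so that
\[
\ex{}{\textstyle\sum_{i} Z_i} = \sum_{j=1}^{d} \ex{\mu, X}{\left(\bar M^j(X) - \mu^j\right)\textstyle\sum_{i=1}^{n}(X_i^j - \mu^j)},
\]
and each summand is a purely one-dimensional quantity.

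Next I would prove the single-coordinate fingerprinting lemma: for any $f \from \pmo^n \to \R$, writing $g(\mu) = \ex{X \sim P_\mu}{f(X)}$ for the mean as a function of the scalar parameter $\mu \in (-1,1)$,
\[
\ex{X \sim P_\mu}{\left(f(X) - \mu\right)\textstyle\sum_{i=1}^{n}(X_i - \mu)} = (1-\mu^2)\, g'(\mu).
\]
This comes from the score identity $\partial_\mu \log \pr{}{X_i = x_i} = \frac{x_i - \mu}{1-\mu^2}$, which is immediate from $\pr{}{X_i = x_i} = \frac{1+\mu x_i}{2}$ and lets one differentiate the finite sum $g(\mu) = \sum_x f(x) \prod_i \pr{}{X_i = x_i}$ term by term; the contribution of $\mu$ drops out because $\ex{}{X_i} = \mu$. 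Then, averaging over $\mu \sim \mathrm{Unif}[-1,1]$ and integrating by parts (the boundary terms vanish thanks to the $1-\mu^2$ factor) gives
\[
\ex{\mu, X}{\left(f(X) - \mu\right)\textstyle\sum_{i=1}^{n}(X_i - \mu)} = \ex{\mu}{2\mu\, g(\mu)} = \ex{\mu, X}{2\mu f(X)}.
\]

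Applying this with $f = \bar M^j$ for each $j$ and summing, $\ex{}{\sum_i Z_i} = 2\,\ex{\mu,X,M}{\langle \mu, M(X)\rangle}$. To finish, I would use the polarization identity $2\langle \mu, M(X)\rangle = \|\mu\|_2^2 + \|M(X)\|_2^2 - \|M(X) - \mu\|_2^2 \ge \|\mu\|_2^2 - \|M(X)-\mu\|_2^2$, take expectations, and plug in $\ex{\mu}{\|\mu\|_2^2} = d \cdot \ex{}{(\mu^1)^2} = d/3$ and $\ex{}{\|M(X)-\mu\|_2^2} = \alpha^2$ to obtain $\ex{}{\sum_i Z_i} \ge d/3 - \alpha^2$. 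I expect the main obstacle to be the fingerprinting lemma itself — pinning down the score function, justifying differentiation under the sum (trivial here since the sum is finite), and executing the integration by parts; this is also the only step where the specific hard instance matters, namely that the $X_i^j$ are $\pm 1$ with mean $\mu^j$ and that the prior on each $\mu^j$ is uniform on $[-1,1]$ (whence the $1/3$). The remaining pieces — reducing to coordinates, passing to $\bar M^j$, and discarding the nonnegative $\|M(X)\|_2^2$ term in the polarization identity — are routine.
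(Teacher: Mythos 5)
Your proof is correct and follows the same high-level plan as the paper — decompose coordinate-wise and apply the one-dimensional fingerprinting lemma to each coordinate — but with one substantive difference: the paper states the fingerprinting lemma (Lemma~\ref{lem:fp}) as a black box ``whose proof we omit,'' citing~\cite{BunSU17}, whereas you supply the proof. Your derivation via the score identity $\partial_\mu \log \pr{}{X_i = x_i} = (x_i - \mu)/(1-\mu^2)$, integration by parts against the uniform prior, and the resulting exact identity $\ex{\mu,X}{(f(X)-\mu)\sum_i(X_i-\mu)} = 2\,\ex{\mu,X}{\mu f(X)}$ is essentially the standard proof of that lemma, and your final polarization step $2\langle \mu, M(X)\rangle \ge \|\mu\|_2^2 - \|M(X)-\mu\|_2^2$ is just the $d$-dimensional version of the polarization the lemma performs coordinate-by-coordinate, so you recover precisely the paper's bound. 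Two small observations. First, your route makes visible that the boundedness hypothesis $f \colon \pmo^n \to [\pm 1]$ in the stated lemma is not actually needed for this inequality — only $\|M(X)\|_2^2 \ge 0$ is used; the boundedness matters elsewhere (Claim~\ref{clm:mean-lb-1}). Second, when you say ``each summand is a purely one-dimensional quantity,'' be a little careful: $\bar M^j$ depends on all of $X$, not only the $j$-th column, so to invoke the 1-D lemma you should fix $(\mu^{-j}, X^{-j})$, apply the lemma to $x^j \mapsto \ex{M}{M^j(x^j, X^{-j})}$, and then take expectation over the conditioning variables; the paper glosses over this same point, and it goes through because the coordinates of $\mu$ and the columns of $X$ are independent across $j$.
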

The proof relies on the following key lemma, whose proof we omit.
\begin{lemma}[Fingerprinting Lemma~\cite{BunSU17}] \label{lem:fp} If $\mu \in [\pm 1]$ is sampled uniformly, $X_1,\dots,X_n \in \pmo^{n}$ are sampled independently with mean $\mu$, and $f \from \pmo^n \to [\pm 1]$ is any function, then
\begin{equation*}
\ex{\mu,X}{(f(X) - \mu) \cdot \sum_{i=1}^{n} (X_i - \mu)}  \geq \frac{1}{3} - \ex{\mu,X}{(f(X) - \mu)^2}.
\end{equation*}
\end{lemma}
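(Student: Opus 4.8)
The plan is to condition on the scalar mean $\mu$ and reduce the claim to a one‑dimensional calculus inequality for the function $g(\mu) := \ex{X}{f(X) \mid \mu}$. Since $X_1,\dots,X_n$ are independent with $\ex{}{X_i \mid \mu} = \mu$, this $g$ is a polynomial in $\mu$: writing $f$ in its multilinear expansion $f(X) = \sum_{S \subseteq [n]} \widehat f(S) \prod_{i \in S} X_i$ gives $g(\mu) = \sum_S \widehat f(S) \mu^{|S|}$.

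The heart of the argument is the \emph{fingerprinting identity}: for every fixed $\mu$,
\[
\ex{X}{(f(X) - \mu) \sum_{i=1}^n (X_i - \mu) \mid \mu} = (1 - \mu^2)\, g'(\mu).
\]
I would derive this from $X_i^2 = 1$ and independence. A direct computation gives $\ex{X}{f(X) X_i \mid \mu} = \sum_{S \ni i} \widehat f(S)\mu^{|S|-1} + \sum_{S \not\ni i} \widehat f(S)\mu^{|S|+1}$, hence $\ex{X}{(f(X)-\mu)(X_i - \mu) \mid \mu} = \ex{X}{f(X)X_i \mid \mu} - \mu\, g(\mu) = (1-\mu^2)\sum_{S \ni i}\widehat f(S)\mu^{|S|-1}$; summing over $i$ collects a factor $|S|$ and produces $(1-\mu^2) g'(\mu)$. (Equivalently, one can differentiate the product measure $\Pr[X_i = \pm 1] = (1 \pm \mu)/2$ in $\mu$ directly.)

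Next I would average over $\mu \sim \mathrm{Unif}[-1,1]$, whose density is $\tfrac12$ on $[-1,1]$, and integrate by parts; the boundary terms vanish because $1 - \mu^2 = 0$ at $\mu = \pm 1$, leaving
\[
\ex{\mu,X}{(f(X)-\mu)\sum_{i=1}^n(X_i - \mu)} = \tfrac12\int_{-1}^1 (1-\mu^2) g'(\mu)\, d\mu = \int_{-1}^1 \mu\, g(\mu)\, d\mu.
\]
For the error term, discarding the nonnegative conditional variance of $f(X)$ given $\mu$ gives $\ex{\mu,X}{(f(X)-\mu)^2} \ge \tfrac12\int_{-1}^1 (g(\mu)-\mu)^2\, d\mu$. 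It therefore suffices to establish the purely analytic inequality
\[
\int_{-1}^1 \mu\, g(\mu)\, d\mu + \tfrac12 \int_{-1}^1 (g(\mu)-\mu)^2\, d\mu \ \ge\ \tfrac13 .
\]

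Expanding the square on the left, the two copies of $\int_{-1}^1 \mu g(\mu)\,d\mu$ cancel and one is left with $\tfrac12\int_{-1}^1 g(\mu)^2\, d\mu + \tfrac12\int_{-1}^1 \mu^2\, d\mu \ge \tfrac12\int_{-1}^1 \mu^2\, d\mu = \tfrac13$, which finishes the proof. The only step requiring any insight is the fingerprinting identity; everything after it is elementary calculus, and it is precisely the cancellation in the last display that makes the constant $\tfrac13$ — namely $\tfrac12$ times the second moment $\tfrac23$ of the uniform law on $[-1,1]$ — appear, with no use made of the fact that $f$ is bounded.
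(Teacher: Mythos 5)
Your proof is correct, and since the paper omits the proof (citing \cite{BunSU17} and following the presentation in \cite{Steinke16}), there is no in-text argument to compare against; what you have written is the canonical proof from those sources. The key steps all check out: conditioning on $\mu$ to reduce to a statement about $g(\mu) = \ex{X}{f(X) \mid \mu}$; the fingerprinting identity $\ex{X}{(f(X)-\mu)\sum_{i}(X_i-\mu) \mid \mu} = (1-\mu^2)\,g'(\mu)$, which you derive correctly from the multilinear expansion using $X_i^2 = 1$; integration by parts against the $\mathrm{Unif}[-1,1]$ density, where the $1-\mu^2$ factor conveniently kills the boundary terms and yields $\int_{-1}^1 \mu\,g(\mu)\,d\mu$; the Jensen step $\ex{\mu,X}{(f(X)-\mu)^2} \geq \frac12\int_{-1}^1 (g(\mu)-\mu)^2\,d\mu$; and the final cancellation, where expanding $(g-\mu)^2$ makes the $-\int\mu g$ term absorb the $+\int\mu g$ term and leaves $\frac12\int g^2 + \frac12\int \mu^2 \geq \frac12 \cdot \frac23 = \frac13$. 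Your closing observation that the hypothesis $f(X) \in [\pm 1]$ is never used is also accurate---the argument only needs $g$ to be well-defined, which is automatic since $\pmo^n$ is finite---so you have in fact proved a slightly more general statement than the one quoted.
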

The lemma is somewhat technical, but for intuition, consider the case where $f(X) = \frac{1}{n}\sum_{i} X_i$ is the empirical mean.  In this case we have 
\begin{align*}
  \ex{\mu,X}{(f(X) - \mu) \cdot \sum_{i=1}^n (X_i - \mu)} 
={} \ex{\mu}{\frac{1}{n} \sum_i \ex{X}{ (X_i - \mu)^2} } 
={} \ex{\mu}{\var{}{X_i}} = \frac{1}{3}.
\end{align*}
The lemma says that, when $\mu$ is sampled this way, then any modification of $f$ that reduces the correlation between $f(X)$ and $\sum_i X_i$ will increase the mean-squared-error of $f$ proportionally.  

\begin{proof}[Proof of Claim~\ref{clm:mean-lb-3}]
We can apply the lemma to each coordinate of the estimate $M(X)$.
\begin{align*}
\ex{}{\sum_{i=1}^{n} Z_i} 
={} &\ex{}{\sum_{i=1}^{n} \left\langle M(X) - \mu, X_i - \mu \right\rangle} \\
={} &\sum_{j=1}^{d} \ex{}{(M^j(X) - \mu^j)\cdot \sum_{i=1}^{n} (X_i^j - \mu^j)} \\
\geq{} &\sum_{j=1}^{d} \left( \frac{1}{3} - \ex{}{(M^j(X) - \mu^j)^2} \right) \tag{Lemma~\ref{lem:fp}} \\
={} &\frac{d}{3} - \ex{}{\| M(X) - \mu \|_2^2} 
={} \frac{d}{3} - \alpha^2. \qedhere
\end{align*}
\end{proof}

Combining Claims~\ref{clm:mean-lb-2} and~\ref{clm:mean-lb-3} gives
\begin{equation}
\frac{d}{3} - \alpha^2 \leq 4n\alpha \eps + 8n \delta d.
\end{equation}
Now, if $\alpha^2 \geq \frac{d}{6}$ then we're done, so we'll assume that $\alpha^2 \leq \frac{d}{6}$.  Further, by our assumption on the value of $\delta$, $8n \delta d \leq \frac{d}{12}$.  In this case we can rearrange terms and square both sides to obtain
\begin{equation}
\alpha^2 \geq{} \frac{1}{16 \eps^2 n^2} \left(\frac{d}{3} - \alpha^2 - 8 n\delta d\right)^2 \geq \frac{1}{16 \eps^2 n^2} \left(\frac{d}{12}\right)^2 = \frac{d^2}{2304 \eps^2 n^2}. 
\end{equation}
Combining the two cases for $\alpha^2$ gives $\alpha^2 \geq \min\{ \frac{d}{6}, \frac{d^2}{2304 \eps^2 n^2} \}$, as desired.

\section{CDF Estimation for Discrete, Univariate Distributions}
Suppose we have a distribution $P$ over the ordered, discrete domain $\{1,\dots,D\}$ and let $\cP$ be the family of all such distributions.  The CDF of the distribution is the function $\Phi_{P} : \{1,\dots,D\} \to [0,1]$ given by
\begin{equation}
\Phi_{P}(j) = \pr{}{P \leq j}.
\end{equation}
A natural measure of distance between CDFs is the $\ell_\infty$ distance, as this is the sort of convergence guarantee that the empirical CDF satisfies.  That is, in the non-private setting, the empirical CDF will achieve the minimax rate, which it known by~\cite{DvoretzkyKW56, Massart90} to be
\begin{equation} \label{eq:dkw}
\max_{P \in \cP} \ex{X_{1 \cdots n} \sim P}{\| \Phi_{X} - \Phi_{P} \|_{\infty}} = O\left(\sqrt{\frac{1}{n}} \right).
\end{equation}

\subsection{Private CDF Estimation}

\begin{theorem} \label{thm:cdf-ub}
For every $n \in \N$ and every $\eps,\delta > 0$, there exists an $(\eps,\delta)$-differentially private mechanism $M$ such that
\begin{equation}
\max_{P \in \cP} \ex{X_{1 \cdots n} \sim P}{\| M(X_{1 \cdots n}) - \Phi_{P} \|_{\infty}} = O\left(\sqrt{\frac{1}{n}} + \frac{\log^{3/2}(D) \log^{1/2}(1/\delta)}{\eps n} \right).
\end{equation}
\end{theorem}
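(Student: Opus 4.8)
The plan is to instantiate the \emph{binary-tree} (hierarchical-histogram) mechanism of~\cite{DworkNPR10,ChanSS11} to privately release all prefix sums of the empirical histogram, and then invoke the DKW bound~\eqref{eq:dkw} to pass from the empirical CDF to the population CDF. First I would reduce to the empirical problem: writing $\Phi_{X}$ for the empirical CDF, $\Phi_{X}(j) = \frac1n\left|\{i : X_i \le j\}\right|$, the triangle inequality gives $\|M(X) - \Phi_{P}\|_\infty \le \|M(X) - \Phi_{X}\|_\infty + \|\Phi_{X} - \Phi_{P}\|_\infty$, so taking expectations over $X \sim P$ and the coins of $M$ and bounding the last term by~\eqref{eq:dkw}, it suffices to build an $(\eps,\delta)$-differentially private $M$ with $\ex{M}{\|M(X) - \Phi_{X}\|_\infty} = O\!\left(\frac{\log^{3/2}(D)\log^{1/2}(1/\delta)}{\eps n}\right)$ for every fixed $X$.

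Next I would build the tree. Assuming without loss of generality that $D$ is a power of two (pad the domain, at most doubling $D$), let $T$ be the complete binary tree with leaves $1,\dots,D$, so that each node $v$ owns a dyadic interval $I_v \subseteq \{1,\dots,D\}$ and $T$ has depth $\log D$; set $f(X) = (c_v)_{v\in T}$ with $c_v = \frac1n\left|\{i : X_i \in I_v\}\right|$. Two combinatorial observations drive the analysis: (i) each sample lies in exactly one $I_v$ per level, so replacing one sample changes at most $2\log D$ of the coordinates $c_v$, each by $1/n$, whence $\Delta_{f} \le \sqrt{2\log D}/n$; and (ii) every prefix $\{1,\dots,j\}$ is the disjoint union of a set $S_j$ of at most $\log D$ of the intervals $I_v$ (its canonical dyadic decomposition), so $\Phi_{X}(j) = \sum_{v \in S_j} c_v$.

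Then I would add noise and post-process. Apply the Gaussian mechanism (Lemma~\ref{lem:gauss-mech}) to $f$ with per-coordinate variance $\sigma^2 = \frac{2\log(2/\delta)}{\eps^2}\cdot\frac{2\log D}{n^2}$, releasing $\tilde c_v = c_v + \cN(0,\sigma^2)$ with $(\eps,\delta)$-differential privacy and $\sigma = O\!\left(\frac{\sqrt{\log(1/\delta)\log D}}{\eps n}\right)$; then let $M(X)$ be the function whose value at $j$ is $\sum_{v\in S_j}\tilde c_v$, which stays $(\eps,\delta)$-differentially private by post-processing (Lemma~\ref{lem:post-processing}). For each fixed $j$, this value differs from $\Phi_{X}(j)$ by $\sum_{v\in S_j}(\tilde c_v - c_v)$, a mean-zero Gaussian of variance at most $|S_j|\sigma^2 \le \sigma^2\log D$; since the expected maximum of $D$ mean-zero Gaussians of standard deviation $\le \tau$ is $O(\tau\sqrt{\log D})$, this yields $\ex{M}{\|M(X)-\Phi_{X}\|_\infty} = O(\sqrt{\log D}\cdot\sigma\cdot\sqrt{\log D}) = O(\sigma\log D) = O\!\left(\frac{\log^{3/2}(D)\log^{1/2}(1/\delta)}{\eps n}\right)$, uniformly in $X$, which together with the reduction above finishes the proof.

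I do not expect a genuine obstacle; the point requiring care is tracking the three independent sources of a $\sqrt{\log D}$ factor that multiply to the stated $\log^{3/2}D$: the depth-$\log D$ tree inflating the $\ell_2$-sensitivity and hence the noise scale $\sigma$; the sum of up to $\log D$ noisy node-counts forming a single prefix estimate; and the maximum over the $D$ prefixes. (One could additionally project $M(X)$ onto monotone $[0,1]$-valued functions, which only decreases the $\ell_\infty$ distance to a genuine CDF, but this is unnecessary for the bound.)
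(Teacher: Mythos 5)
Your proposal is correct and follows essentially the same route as the paper: a hierarchical (binary-tree) histogram, the Gaussian mechanism applied to all $2D-2$ dyadic interval counts (with $\ell_2$-sensitivity $\sqrt{2\log D}/n$), canonical dyadic decomposition of each prefix into at most $\log D$ intervals, and a max-of-$D$-Gaussians bound, preceded by the DKW reduction from population to empirical CDF. The only difference is cosmetic — the paper packages the prefix reconstruction as a linear map $A$ rather than sums over tree paths — and your bookkeeping of the three $\sqrt{\log D}$ factors matches the paper's computation.
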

\begin{proof}
Assume without loss of generality that $D = 2^{d}$ for an integer $d \geq 1$.  Let $X_{1 \cdots n} \sim P$ be a sample.  By the triangle inequality, we have
\begin{align*}
\ex{X_{1 \cdots n} \sim P}{\| M(X_{1 \cdots n}) - \Phi_{P} \|_{\infty}} &\leq{} \ex{X_{1 \cdots n} \sim P}{\| \Phi_{X} - \Phi_{P} \|_{\infty} + \| M(X_{1 \cdots n}) - \Phi_{X} \|_{\infty}} \\
&\leq{} O(\sqrt{1/n}) + \ex{X_{1 \cdots n} \sim P}{\| M(X_{1 \cdots n}) - \Phi_{X} \|_{\infty}},
\end{align*}
so we will focus on constructing $M$ to approximate $\Phi_{X}$.

For any $\ell = 0,\dots,d-1$ and $j = 1,\dots,2^{d - \ell}$, consider the statistics
\begin{equation}
f_{\ell,j}(X_{1 \cdots n}) = \frac{1}{n} \sum_{i=1}^{n} \mathbbm{1}\{ (j-1)2^{\ell} + 1 \leq X_i \leq j 2^{\ell} \}.
\end{equation}
  Let $f : \{1,\dots,D\}^n \to [0,1]^{2D - 2}$
  be the function whose output consists of all $2D-2$ such counts.  To decipher this notation, for a given $\ell$, the counts $f_{\ell,\cdot}$ form a histogram of $X_{1 \cdots n}$ using consecutive bins of width $2^{\ell}$, and we consider the $\log(D)$ histograms of geometrically increasing width $1,2,4,\dots,D$. 

First, we claim that the function $f$ has low sensitivity---for adjacent samples $X$ and $X'$,
\begin{equation}
\| f(X) - f(X') \|_2^2  \leq \frac{2 \log(D)}{n^2}.
\end{equation}
Thus, we can use the Gaussian mechanism:
\begin{equation}
M'(X_{1 \cdots n}) = f(X_{1 \cdots n}) + \cN\left(0, \frac{2 \log(D) \log(1/\delta)}{\eps^2 n^2} \cdot \mathbb{I}_{2D \times 2D}\right).
\end{equation}
As we will argue, there exists a matrix $A \in \R^{2D \times 2D}$ such that $\Phi_{X} = A \cdot f(X_{1 \cdots n})$.  We will let $M(X_{1 \cdots n}) = A \cdot M'(X_{1 \cdots n})$.  Since differential privacy is closed under post-processing, $M$ inherits the privacy of $M'$.

We will now show how to construct the matrix $A$ and analyze the error of $M$.  For any $j = 1,\dots,D$, we can form the interval $\{1,\dots,j\}$ as the union of at most $\log D$ disjoint intervals of the form we've computed, and therefore we can obtain $\Phi_{X}(j)$ as the sum of at most $\log D$ of the entries of $f(X)$.  For example, if $j = 5$ then we can write
\begin{equation}
\{1,\dots,7\} = \{1,\dots,4\} \cup \{5,6\} \cup \{7\}
\end{equation}
and
\begin{equation}
\Phi_{X}(5) = f_{2,1} + f_{1,3} + f_{0,7}.
\end{equation}
See Figure~\ref{fig:bin-tree-mech} for a visual representation of the decomposition.
Thus we can construct the matrix $A$ using this information.  

\begin{figure}[h]
	\centering
	\includegraphics[width=12cm]{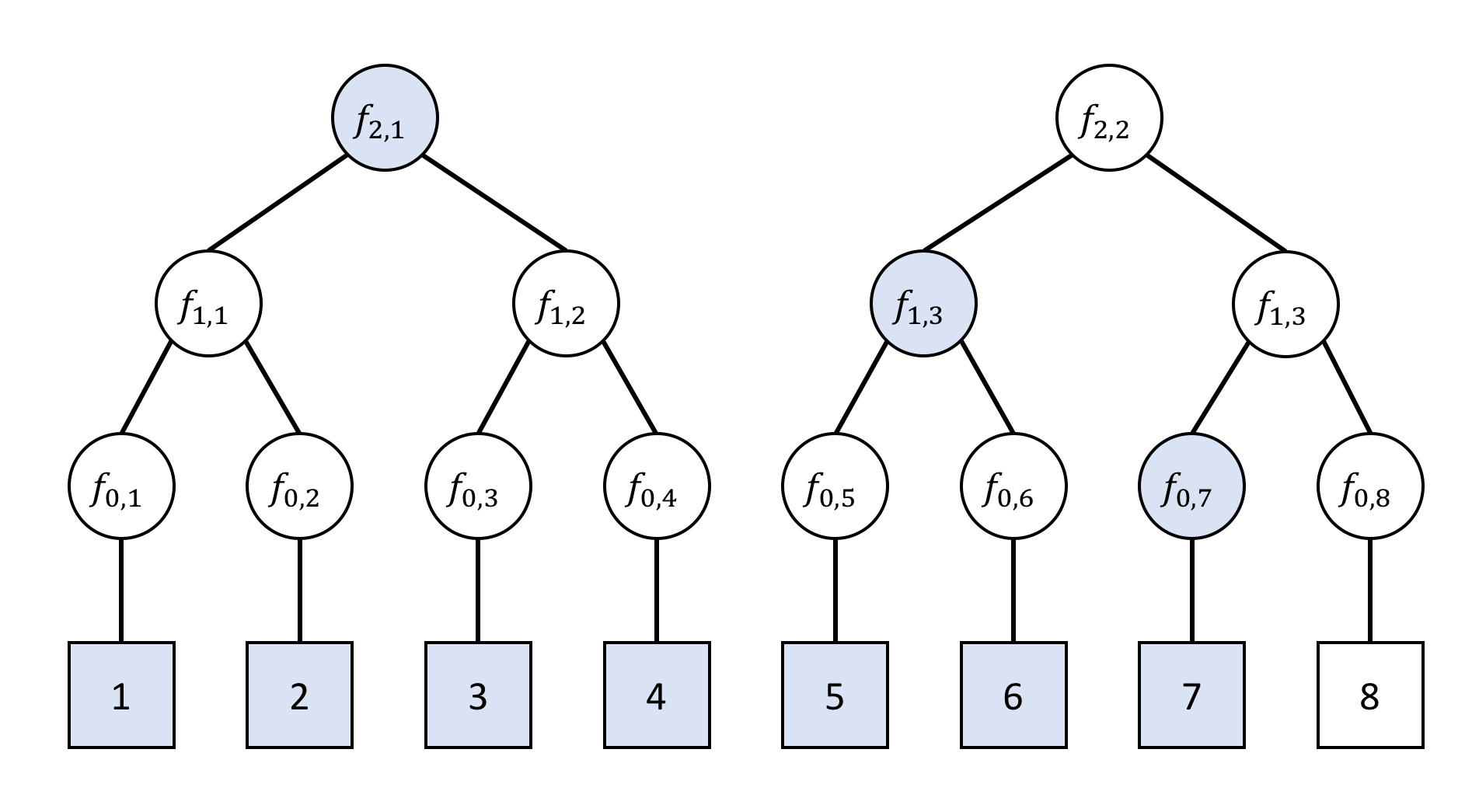}
	\caption{A diagram showing the hierarchical decomposition of the domain $\{1,\dots,8\}$ using 14 intervals.  The highlighted squares represent the interval $\{1,\dots,7\}$ and the highlighted circles show the decomposition of this interval into a union of $3$ intervals in the tree.} \label{fig:bin-tree-mech}
\end{figure}

Note that each entry of $A f(X)$ is the sum of at most $\log(D)$ entries of $f(X)$.  Thus, if we use the output of $M'(X_{1 \cdots n})$ in place of $f(X_{1 \cdots n})$, for every $j$ we obtain
\begin{equation}
\Phi_{X}(j) + \cN(0, \sigma^2) \quad \textrm{for} \quad
\sigma^2 = \frac{ 2 \log^2(D) \log(1/\delta)}{\eps^2 n^2}.
\end{equation}
Applying standard bounds on the expected supremum of a Gaussian process, we have
\begin{equation}
\ex{}{\| M(X_{1 \cdots n}) - \Phi_{X} \|_{\infty}} = O( \sigma \sqrt{\log D}) = O\left(\frac{\log^{3/2}(D) \log^{1/2}(1/\delta)}{\eps n} \right).
\end{equation}
\end{proof}

\subsection{Why Restrict the Domain?}

A drawback of the estimator we constructed is that it only applies to distributions of finite support $\{1,2,\dots,D\}$, albeit with a relatively mild dependence on the support size.  If privacy isn't a concern, then no such restriction is necessary, as the bound~\eqref{eq:dkw} applies equally well to any distribution over $\R$.  Can we construct a differentially private estimator for distributions with infinite support?

Perhaps surprisingly, the answer to this question is no!  Any differentially private estimator for the CDF of the distribution has to have a rate that depends on the support size, and cannot give non-trivial rates for distributions with infinite support.

\begin{theorem}[\cite{BunNSV15}] \label{thm:cdf-lb}
 If $\cP$ consists of all distributions on $\{1,\dots,D\}$, then
\begin{equation}
\min_{M \in \cM_{1, \frac{1}{n}}} \max_{P \in \cP} \ex{X_{1 \cdots n} \sim P}{\| M(X_{1 \cdots n}) - \Phi_{P} \|_{\infty}} = \Omega\left(\frac{\log^* D}{n} \right).\footnote{The notation $\log^* D$ refers to the \href{https://en.wikipedia.org/wiki/Iterated_logarithm}{iterated logarithm}.}
\end{equation}
\end{theorem}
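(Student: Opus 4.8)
The plan is to observe that Theorem~\ref{thm:cdf-lb} is essentially the lower bound of~\cite{BunNSV15} on privately \emph{releasing threshold functions} --- approximating $\Phi_X$ simultaneously at all $t \in \{1,\dots,D\}$ is the same as approximately answering the counting queries $\tfrac1n\sum_i \mathbbm{1}[X_i \le t]$ --- and to sketch the recursive argument behind it. To pass from the population statement to an empirical one, I would argue as in the proof of Theorem~\ref{thm:cdf-ub}: for any $P \in \cP$, the triangle inequality together with~\eqref{eq:dkw} gives $\ex{X_{1 \cdots n} \sim P}{\| M(X_{1 \cdots n}) - \Phi_X \|_\infty} \le \ex{}{\| M(X_{1 \cdots n}) - \Phi_P \|_\infty} + O(1/\sqrt n)$, so it suffices to lower bound $\ex{}{\| M(X_{1 \cdots n}) - \Phi_X \|_\infty}$ when $X_{1 \cdots n}$ is drawn i.i.d.\ from a well-chosen hard distribution $P$ over $\{1,\dots,D\}$. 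The key structural move is a reduction to the \emph{interior point problem}: given $Y = (Y_1,\dots,Y_n) \in \{1,\dots,D\}^n$, output a value in $[\min_i Y_i,\ \max_i Y_i]$. A CDF estimator whose error is bounded away from $1/2$ solves this privately: letting $A(Y) := \min\{\, j : [M(Y)]_j \ge 1/2 \,\}$ (and $A(Y) := D$ if no such $j$ exists), Lemma~\ref{lem:post-processing} gives that $A$ is as private as $M$, while $\Phi_Y(j) = 0$ for $j < \min_i Y_i$ and $\Phi_Y(\max_i Y_i) = 1$ force $A(Y) \in [\min_i Y_i,\ \max_i Y_i]$ whenever $\| M(Y) - \Phi_Y \|_\infty < 1/2$. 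Combined with Markov's inequality, the hypothesis $\ex{}{\| M(X_{1 \cdots n}) - \Phi_P \|_\infty} \le \alpha$ with $\alpha$ a small enough constant makes $A$ solve interior point with probability at least $9/10$ on $Y \sim P^{\otimes n}$ for every $P$; so if interior point requires $n = \Omega(\log^* D)$ samples, then $\alpha = \Omega(1)$ whenever $n = O(\log^* D)$, which is exactly the content of the theorem in that regime. To recover the $1/n$ factor for larger $n$, I would run the same reduction against a distribution whose CDF has $\Theta(n/\log^* D)$ well-separated jumps and note that $\ell_\infty$-error $o(\log^* D/n)$ would locate every jump, i.e.\ solve that many interior-point instances on data of effective size $\log^* D$ each.

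The heart of the proof --- and the step I expect to be the main obstacle --- is the $\Omega(\log^* D)$ lower bound for the interior point problem. I would prove it by the recursive ``domain-shrinking'' argument of~\cite{BunNSV15}: show that an $(\eps,\delta)$-differentially private interior-point algorithm on $\{1,\dots,D\}$ using $n$ samples can be converted, by encoding a smaller domain $\{1,\dots,D'\}$ with $D' \approx \log D$ into $\{1,\dots,D\}$, running the algorithm, and decoding its output, into an $(\eps',\delta')$-differentially private interior-point algorithm on $\{1,\dots,D'\}$ that uses only $n-1$ samples, with only a mild degradation of the privacy parameters and success probability. Iterating this $O(\log^* D)$ times drives the domain down to size $2$ while driving the sample budget down to $1$; but a one-sample interior-point algorithm must output its single input point (since $\min$ and $\max$ then coincide) with probability close to $1$, and for $\eps = O(1)$ and $\delta = O(1/n)$ this is flatly incompatible with $(\eps,\delta)$-differential privacy on the two adjacent one-point datasets, which is the contradiction. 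Hence no private CDF estimator with constant error can exist unless $n = \Omega(\log^* D)$.

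What makes this delicate is the bookkeeping over the $O(\log^* D)$ recursive steps: one has to check that the encode-run-decode reduction preserves privacy and accuracy up to the claimed factors, so that $\eps = 1$, $\delta = 1/n$ (and a constant success probability) are still within the range for which the base-case contradiction fires after $O(\log^* D)$ steps --- this is precisely where the real work lies, and the reason~\cite{BunNSV15} is a subtle paper rather than a one-line argument. I would also keep in mind an alternative route developed in later work, which replaces the recursion by a Ramsey-theoretic argument: from a private algorithm one extracts, through iterated applications of Ramsey's theorem, a large sub-domain on which the algorithm behaves essentially the same way regardless of its input, contradicting either privacy or accuracy; because Ramsey numbers grow tower-fast, a successful private algorithm again forces $D$ to be at most a tower of height $O(n)$, i.e.\ $n = \Omega(\log^* D)$.
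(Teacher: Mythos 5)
The paper does not include a proof of Theorem~\ref{thm:cdf-lb}: it is stated as a pointer to \cite{BunNSV15}, and the surrounding discussion only comments on its consequences. So there is no in-paper argument to compare against, and the relevant benchmark is the cited work. Measured against that, your outline captures the right skeleton: the reduction from CDF/threshold release to the interior-point problem (threshold $M(Y)$ at $1/2$, invoke Lemma~\ref{lem:post-processing}) is exactly the right move, the recursive domain-shrinking argument with the $n=1$ privacy-violation base case is the real engine, and the Ramsey-theoretic alternative you mention is a genuine later development of this line.

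Two genuine gaps remain, however, beyond the recursive bookkeeping you rightly flag. First, the interior-point lower bound in \cite{BunNSV15} is proved against a structured, adversarial distribution over datasets produced by the recursion, \emph{not} against i.i.d.\ draws, whereas Theorem~\ref{thm:cdf-lb} is a population minimax statement with $X_{1\cdots n} \sim P^{\otimes n}$. Invoking DKW plus Markov does not by itself bridge this: the worst-case interior-point instances need not look like typical draws from any fixed product distribution $P$, so one must either show that the recursion's hard instances can be realized (approximately) i.i.d., or route through \cite{BunNSV15}'s separate distributional lower bound for properly PAC-learning thresholds, which is what actually yields the i.i.d.\ form. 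Second, going from ``error $\Omega(1)$ when $n = O(\log^* D)$'' to the quantitative rate $\Omega(\log^* D/n)$ for all $n$ needs more than the one-sentence padding idea: one has to exhibit a distribution with $\Theta(n/\log^* D)$ pieces each carrying probability mass $\Theta(\log^* D/n)$, verify that an $\ell_\infty$ error of $\Omega(1)$ \emph{relative to a piece's mass} is forced on a constant fraction of pieces, and check how the privacy budget behaves across pieces. These two steps are precisely where the real content of \cite{BunNSV15} lies, and a complete proof would need to supply them.
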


We emphasize that this theorem shouldn't meet with too much alarm, as $\log^* D$ grows remarkably slowly with $D$.  There are differentially private CDF estimators that achieve very mild dependence on $D$~\cite{BeimelNS13b,BunNSV15}, including one nearly matching the lower bound in Theorem~\ref{thm:cdf-lb}.  Moreover, if we want to estimate a distribution over $\R$, and are willing to make some mild regularity conditions on the distribution, then we can approximate it by a distribution with finite support and only increase the rate slightly.  However, what Theorem~\ref{thm:cdf-lb} shows is that there is no ``one-size-fits-all'' solution to private CDF estimation that achieves similar guarantees to the empirical CDF.  That is, the right algorithm has to be tailored somewhat to the application and the assumptions we can make about the distribution. 

\section{More Private Statistics}
Of course, the story doesn't end here!  There's a whole wide world of differentially private statistics beyond what we've mentioned already.  We proceed to survey just a few other directions of study in private statistics. 

\subsection{Parameter and Distribution Estimation}
A number of the early works in differential privacy give methods for differentially private statistical estimation for i.i.d.\ data.  The earliest works~\cite{DinurN03,DworkN04,BlumDMN05,DworkMNS06}, which introduced the Gaussian mechanism, among other foundational results, can be thought of as methods for estimating the mean of a distribution over the hypercube $\{0,1\}^d$ in the $\ell_\infty$ norm.  Tight lower bounds for this problem follow from the tracing attacks introduced in~\cite{BunUV14,SteinkeU15j,DworkSSUV15,BunSU17,SteinkeU17}.  A very recent work of Acharya, Sun, and Zhang~\cite{AcharyaSZ20} adapts classical tools for proving estimation and testing lower bounds (lemmata of Assouad, Fano, and Le Cam) to the differentially private setting. Steinke and Ullman~\cite{SteinkeU17} give tight minimax lower bounds for the weaker guarantee of selecting the largest coordinates of the mean, which were refined by Cai, Wang, and Zhang~\cite{CaiWZ19} to give lower bounds for sparse mean-estimation problems.

Nissim, Raskhodnikova, and Smith introduced the highly general sample-and-aggregate paradigm, which they apply to several learning problems (e.g.,~learning mixtures of Gaussians)~\cite{NissimRS07}.  Later, Smith~\cite{Smith11} showed that this paradigm can be used to transform any estimator for any asymptotically normal, univariate statistic over a bounded data domain into a differentially private one with the same asymptotic convergence rate. 

Subsequent work has focused on both relaxing the assumptions in~\cite{Smith11}, particularly boundedness, and on giving finite-sample guarantees.  Karwa and Vadhan investigated the problem of Gaussian mean estimation, proving the first near-optimal bounds for this setting~\cite{KarwaV18}.  In particular, exploiting concentration properties of Gaussian data allows us to achieve non-trivial results even with unbounded data, which is impossible in general.  Following this, Kamath, Li, Singhal, and Ullman moved to the multivariate setting, investigating the estimation of Gaussians and binary product distributions in total variation distance~\cite{KamathLSU19}.
In certain cases (i.e., Gaussians with identity covariance), this is equivalent to mean estimation in $\ell_2$-distance, though not always.
For example, for binary product distribution, one must estimate the mean in a type of $\chi^2$-distance instead.  The perspective of distribution estimation rather than parameter estimation can be valuable.  Bun, Kamath, Steinke, and Wu~\cite{BunKSW19} develop a primitive for private hypothesis selection, which they apply to learn any coverable class of distributions under pure differential privacy.
Through the lens of distribution estimation, their work implies an upper bound for mean estimation of binary product distributions that bypasses lower bounds for the same problem in the empirical setting.
In addition to work on mean estimation in the sub-Gaussian setting, such as the results discussed earlier, mean estimation has also been studied under weaker moment conditions~\cite{BunS19, KamathSU20}.
Beyond these settings, there has also been study of estimation of discrete multinomials, including estimation in Kolmogorov distance~\cite{BunNSV15} and in total variation distance for structured distributions~\cite{DiakonikolasHS15}, and parameter estimation of Markov Random Fields~\cite{ZhangKKW20}.

A different approach to constructing differentially private estimators is based on robust statistics.  This approah begins with the influential work of Dwork and Lei~\cite{DworkL09}, which introduced the propose-test-release framework, and applied to estimating robust statistics such as the median and interquartile range.  While the definitions in robust statistics and differential privacy are semantically similar, formal connections between the two remain relatively scant, which suggests a productive area for future study.

\subsection{Hypothesis Testing}
An influential work of Homer et al.~\cite{HomerSRDTMPSNC08} demonstrated the vulnerability of classical statistics in a genomic setting, showing that certain $\chi^2$-statistics on many different variables could allow an attacker to determine the presence of an individual in a genome-wide association study (GWAS).
Motivated by these concerns, an early line of work from the statistics community focused on addressing these issues~\cite{VuS09,UhlerSF13,YuFSU14}.

More recently, work on private hypothesis testing can be divided roughly into two lines.  The first focuses on the minimax sample complexity, in a line initiated by Cai, Daskalakis, and Kamath~\cite{CaiDK17}, who give an algorithm for privately testing goodness-of-fit (more precisely, a statistician might refer to this problem as one-sample testing of multinomial data).
A number of subsequent works have essentially settled the complexity of this problem~\cite{AcharyaSZ18, AliakbarpourDR18}, giving tight upper and lower bounds.
Other papers in this line study related problems, including the two-sample version of the problem, independence testing, and goodness-of-fit testing for multivariate product distributions~\cite{AcharyaSZ18, AliakbarpourDR18, AliakbarpourDKR19, CanonneKMUZ19}.
A related paper studies the minimax sample complexity of property \emph{estimation}, rather than testing of discrete distributions, including support size and entropy~\cite{AcharyaKSZ18}.
Other recent works in this vein focus on testing of simple hypotheses~\cite{CummingsKMTZ18, CanonneKMSU19}.  
In particular~\cite{CanonneKMSU19} proves an analogue of the Neyman-Pearson Lemma for differentially private testing of simple hypotheses.  
A paper of Awan and Slavkovic~\cite{AwanS18} gives a universally optimal test when the domain size is two, however Brenner and Nissim~\cite{BrennerN14} shows that such universally optimal tests cannot exist when the domain has more than two elements.
A related problem in this space is private change-point detection~\cite{CummingsKMTZ18, CanonneKMSU19, CummingsKLZ19} -- in this setting, we are given a time series of datapoints which are sampled from a distribution, which at some point, changes to a different distribution.
The goal is to (privately) determine when this point occurs.

Complementary to minimax hypothesis testing, a line of work~\cite{WangLK15,GaboardiLRV16,KiferR17,KakizakiSF17,CampbellBRG18,SwanbergGGRGB19,CouchKSBG19} designs differentially private versions of popular test statistics for testing goodness-of-fit, closeness, and independence, as well as private ANOVA, focusing on the performance at small sample sizes.
Work by Wang et al.~\cite{WangKLK18} focuses on generating statistical approximating distributions for differentially private statistics, which they apply to hypothesis testing problems.

\subsection{Differential Privacy on Graphs}
There is a significant amount of work on differentially private analysis of graphs.  We remark that these algorithms can satisfy either edge or node differential privacy.  The former (easier) guarantee defines a neighboring graph to be one obtained by adding or removing a single edge, while in the latter (harder) setting, a neighboring graph is one that can be obtained by modifying the set of edges connected to a single node.   The main challenge in this area is that most graph statistics can have high sensitivity in the worst-case.

The initial works in this area focused on the empirical setting, and goals range from counting subgraphs~\cite{KarwaRSY11,BlockiBDS13,KasiviswanathanNRS13,ChenZ13, RaskhodnikovaS16} to outputting a privatized graph which approximates the original~\cite{GuptaRU12,BlockiBDS12, Upadhyay13, AroraU19,EliasKKL20}.
In contrast to the setting discussed in most of this series, it seems that there are larger qualitative differences between the study of empirical and population statistics due to the fact that many graph statistics have high worst-case sensitivity, but may have smaller sensitivity on typical graphs from many natural models.

In the population statistics setting, recent work has focused on parameter estimation of the underlying random graph model.
So far this work has given estimators for the $\beta$-model~\cite{KarwaS16} and graphons~\cite{BorgsCS15,BorgsCSZ18a}.
Graphons are a generalization of the stochastic block model, which is, in turn, a generalization of the Erd\H{o}s-R\'enyi model.
Interestingly, the methods of Lipschitz-extensions introduced in the empirical setting by~\cite{BlockiBDS13,KasiviswanathanNRS13} are the main tool used in the statistical setting as well.
While the first works on private graphon estimation were not computationally efficient, a recent focus has been on obviating these issues for certain important cases, such as the Erd\H{o}s-R\'enyi setting~\cite{SealfonU19}.

\section*{Acknowledgments}
Thanks to Adam Smith, who helped kick off this project, and Cl\'ement Canonne, Aaron Roth, and Thomas Steinke for helpful comments.

\bibliographystyle{alpha}
\bibliography{biblio.bib,bibliography/refs.bib}

\end{document}